\newtheorem{theorem}{Theorem}
\newtheorem{lemma}[theorem]{Lemma}
\newtheorem{definition}[]{Definition}
\title{Less Is Better: Unweighted Data Subsampling via Influence Function}
\author{Zifeng Wang\textsuperscript{\rm 1}, Hong Zhu\textsuperscript{\rm 2 \thanks{This author contributes equally with Zifeng Wang.}}, Zhenhua Dong\textsuperscript{\rm 2 \thanks{Corresponding author.}}, Xiuqiang He\textsuperscript{\rm 2}, Shao-Lun Huang\textsuperscript{\rm 1}\\ 
\textsuperscript{\rm 1}Tsinghua-Berkeley Shenzhen Institute, Tsinghua University
\textsuperscript{\rm 2}Noah's Ark Lab, Huawei\\
Email: wangzf18@mails.tsinghua.edu.cn, zhuhong8@huawei.com, dongzhenhua@huawei.com, \\
hexiuqiang1@huawei.com, shaolun.huang@sz.tsinghua.edu.cn 
}
\begin{document}
\maketitle

\begin{abstract}
In the time of \emph{Big Data}, training complex models on large-scale data sets is challenging, making it appealing to reduce data volume for saving computation resources by subsampling. Most previous works in subsampling are weighted methods designed to help the performance of subset-model approach the full-set-model, hence the weighted methods have no chance to acquire a subset-model that is better than the full-set-model. However, we question that \emph{how can we achieve better model with less data?} In this work, we propose a novel Unweighted Influence Data Subsampling (UIDS) method, and prove that the subset-model acquired through our method can outperform the full-set-model. Besides, we show that overly confident on a given test set for sampling is common in Influence-based subsampling methods, which can eventually cause our subset-model's failure in out-of-sample test. To mitigate it, we develop a probabilistic sampling scheme to control the \emph{worst-case risk} over all distributions close to the empirical distribution. The experiment results demonstrate our methods superiority over existed subsampling methods in diverse tasks,  such as text classification, image classification, click-through prediction, etc.
\end{abstract}

\section{Introduction}
Bigger data can probably train a better model. It is almost the common sense nowadays for machine learning and deep learning practitioners. In the classical \emph{Empirical Risk Minimization} (ERM) theory, it assumes that our training samples and test samples are i.i.d drawn from the same distribution. The learning theory tries to minimize the estimate of the generalization risk, namely the empirical risk. Therefore, when training samples are large enough, we can have hypothesis function working well on the test set by optimizing on the empirical risk.

However, the ERM faces several challenges: 1) a model is learned from training set which is generated by $P(x,y)=p_{train}(x)p(y|x)$, and the model is tested on $Q(x,y)=q_{test}(x)p(y|x)$. The \emph{distribution shift} from $P$ to $Q$ violates the ERM's basic assumption; 2) unknown noise in data and its label is common in reality, causing some examples harmful for model's performance \cite{SzegedyZSBEGF13,Zhang2019TheoreticallyPT}; 3) training on large data sets imposes significant burden on computation, some large-scale deep learning models require hundreds even thousands of GPUs.

Specifically, the \emph{subsampling} approaches are initially proposed to cope with the last challenge. By virtue of subtle sampling regime, the selected subset can best approximate the original full set in terms of data distribution, hence the model can be trained on a compressed version of data set.
In our work, we attempt to design sampling regime that can not only reduce the computation complexity, but also deal with several other ERM's difficulties. For instance, reweighting examples by sampling probabilities to fix mismatch between $P$ and $Q$, and dropping noisy samples to strengthen the model's generalization ability.

Our work can be outlined in four points. First, instead of approaching the full-set-model $\hat{\theta}$, we prove that the model  $\tilde{\theta}$ trained by a selected subset through our subsampling method can outperform the $\hat{\theta}$; second, we propose several probabilistic sampling functions, and analyze how the sampling function influences the worst-case risk \cite{Bagnell05} changes over a $\chi^2$-divergence ball. We further propose a surrogate metric to measure the confidence degree of the sampling methods over the observed distribution, which is useful for evaluating model's generalization ability on a set of distributions; third, for the sake of implementation efficiency, the Hessian-free mixed Preconditioned Conjugate Gradient (PCG) method is used to compute the influence function (IF) in sparse scenarios; last, complete experiments are conducted on diverse tasks to demonstrate our methods superiority over the existing state-of-the-art subsampling methods\footnote{The code can be found at \url{https://github.com/RyanWangZf/Influence_Subsampling}}.

\subsection{Related work}
There are two main ideas to cope with the ERM challenges aforementioned: 1) pessimistic method that tries to learn model robust to noise or bad examples, including $l_2$-norm regularization, Adaboost \cite{FreundS97}, hard example mining \cite{MalisiewiczGE11} and focal loss \cite{LinGGHD17}; and 2) optimistic method that modifies the input distribution directly. There are several genres of optimistic methods:
the example reweighting method is used for dealing with distribution shift by \cite{Bagnell05,Hu2016DoesDR}, and handling data bias by \cite{KumarPK10,ren2018learning}; the sample selection method is applied to inspect and fix mislabeled data by \cite{Zhang0W18}. However, few of them have worked on alleviating the computational burden in terms of big data.

In order to reduce computation, the weighted subsampling method has been explored to approximate the maximum likelihood with a subset on the logistic regression \cite{fithian2014local,wang2018optimal}, and on the generalized linear models \cite{ai2018optimal}. \cite{ting2018optimal} introduces the IF in weighted subsampling for asymptotically optimal sampling probabilities for several generalized linear models. However, it is still an open problem about how to treat high variance of weight terms for weighted subsampling.

Specifically, the IF is defined by the Gateaux derivatives within the scope of \emph{Robust Statistics} \cite{huber2011robust}, and extended to measure example-wise influence \cite{koh2017understanding} and feature-wise influence \cite{SliwinskiSZ19} on validation loss. The family of IF is mainly applied to design adversarial example and explain behaviour of black-box model previously. Recently, the IF on validation loss is used for targeting important samples, \cite{wang2018data} builds a sample selection scheme on Deep Convolutional Networks (CNN), and \cite{sharchilev2018finding} builds specific influential samples selection algorithm for Gradient Boosted Decision Trees (GBDT). However, by far there has no systematic theorem to guide IF's use in subsampling. Our work tries to build theoretical guidance for IF-based subsampling, which combines reweighting and subsampling together to synthetically cope with ERM's challenges, e.g. distribution shift and noisy data.

\section{Preliminaries}
\label{theory}
Training samples $z_1,z_2,..,z_n \in \mathcal{X} \times \mathcal{Y} $ are generated from $P(x,y)$, and $x\in\mathbb{R}^d$ where the $d$ is the number of feature dimension. Specifically for classification task, we have hypothesis function $h_\theta \in\mathcal{H}: \mathbb{R}^d \to \mathbb{R}$ parameterized by $\theta \in \Theta$. The goal is to minimize the 0-1 \emph{risk} $\mathcal{R}_\theta(P)=\mathbb{E}_{P}[\mathbb{I}(h_\theta(x) \neq y)]$, and learn the optimal $\hat{\theta}$. For computational tractability, researchers focus on minimization of the \emph{surrogate loss}, e.g. the log loss for binary classification:
\begin{equation}
    l(z,h_\theta)=-y\log h_\theta(x)-(1-y)\log(1-h_\theta(x))
\end{equation}
Therefore, the risk minimization problem can be empirically approximated by $\hat{\theta} \triangleq \arg\min_{\theta} \frac1n\sum_{i=1}^n l(z_i,h_\theta)$. For convenience, we denote $l(z_i,h_\theta)$ by $l_i(\theta)$. The main notations are listed in Table \ref{notation}.

\begin{table}[htbp]
  \centering
  \caption{Main notation.}
    \begin{tabular}{l|l}
    $z_i$, $z_j$     & training and testing sample, $\in\mathbb{R}^d\times\mathbb{R}$. \\
    $\hat{\theta}$, $\tilde{\theta}$(or $\hat{\theta}_\epsilon$)     &  full-set-model and subset-model, $\in\mathbb{R}^d$.\\
    $\phi_i(\hat{\theta})$     & $z_i$'s influence of whole test set risk, $\in\mathbb{R}$. \\
    $\psi_\theta(z_i)$     & $z_i$'s influence of model parameter, $\in\mathbb{R}^d$. \\
    $\epsilon_i$     & perturbation put on $z_i$'s loss term, $\in\mathbb{R}$. \\
    $\pi_i$     & sampling probability of $z_i$, $\in\mathbb{R}^+$. \\
    $l_i(\theta)$     & model $\theta$'s risk on training sample $z_i$, $\in\mathbb{R}$.\\
    $l_j(\theta)$     & model $\theta$'s risk on test sample $z_j$, $\in\mathbb{R}$.\\
    $P$ & training distribution. \\
    $Q^\prime$ & a specific test distribution. 
    \end{tabular}%
  \label{notation}%
\end{table}%

\subsubsection{Weighted subsampling.}
For a general subsampling framework, each sample is assigned with a random variable $O_i \sim Bern(\pi_i)$, indicating whether this sample is selected or not, such that [$O_i=1]$ $\leftrightarrow$ [$z_i$ is selected]. The weighted subsampling methods have the similar form of objective function $\mathcal{R}_w$ on the subset:
\begin{equation}
\mathcal{R}_w=\frac1n\sum_{i,O_i=1}^n\frac1{\pi_i}l_i(\theta)
\end{equation}
where each term is assigned the inversion of its sampling probability $1/\pi_i$ as the weight. It is similar to the technique used in \emph{Causal Inference} to handle the selection bias \cite{schnabel2016recommendations}, the Eq.\eqref{expectlw} derives the expectation of $\mathcal{R}_w$ on $O$:
\begin{equation} \label{expectlw}
    \mathbb{E}_O(\mathcal{R}_w)=\frac1n\sum_{i=1}^n\mathbb{E}_{O_i}(O_i\times\frac1{\pi_i}l_i(\theta))=\frac1n\sum_{i=1}^n l_i(\theta)
\end{equation}
The expectation of the $\mathcal{R}_w$ on a subset is the same as the empirical risk on the full set, which means the weighted subsampling methods aim at finding optimal $\{\pi_i\}_{i=1}^n$ to let the subset risk minimizer $\tilde{\theta}$ as close to the full-set risk minimizer $\hat{\theta}$ as possible.

This weighted approach has three main challenges: 1) we need to modify the existing training procedures to accommodate the weighted loss function; 2) because $\pi$ can be small and its inversion ranges widely, the weighted loss function suffers from high variance; and 3) most importantly, as the weighted methods build a consistent estimator of the full-set-model \cite{ai2018optimal}, it theoretically assumes that subset-model cannot outperform the full-set-model.

\subsubsection{Unweighted subsampling.}
We propose a novel \emph{Unweighted} subsampling method which does not require $\frac1{\pi_i}$ in its objective function:
\begin{equation} \label{uwrisk}
    \mathcal{R}_{uw}=\frac1{|\{i,O_i=1\}|}\sum_{i,O_i=1}l_i(\theta)
\end{equation}
where the $|\cdot|$ means cardinality of the set. From Eq.\eqref{uwrisk}, we can find the expectation of subset-model's risk is no longer equal to the full-set-model's.
This formula can be seen as reweighting the samples with respect to their sampling probabilities implicitly. It directly overcomes the first two challenges in weighted subsampling above-mentioned, but further efforts are required to solve the last one, which will be introduced in the following section. An intuitive demonstration of the difference between the weighted and unweighted methods is shown as Fig. \ref{intuitive}.

\begin{figure}[t]
\centering
\includegraphics[width=8.5cm]{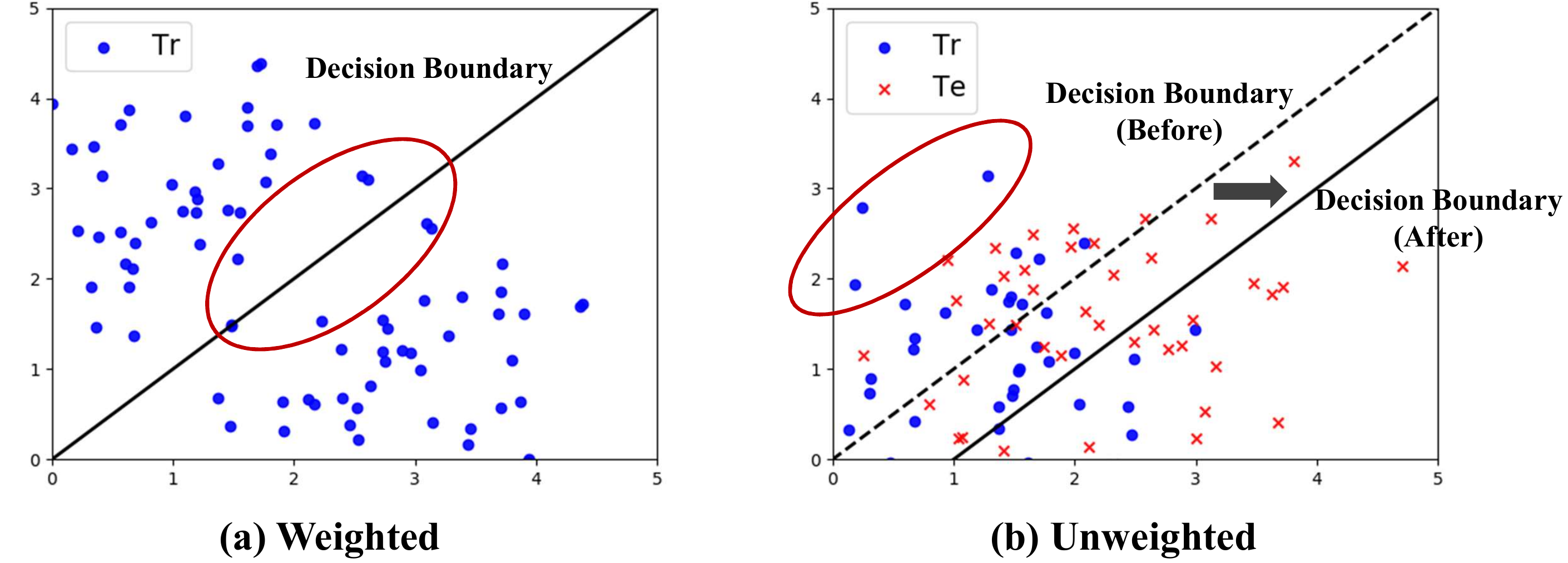}
\caption{(a) if the blue points (training samples) within the red circle are removed, the new optimal decision boundary is still same as the former one; (b) if removing blue points in the red circle, the new decision boundary shifts from the left, while achieves better performance on the Te set.}
\label{intuitive}
\end{figure}

\subsubsection{Influence functions.}
If the $i^\prime$th training sample is upweighted by a small $\epsilon^\prime$ corresponding to its loss term, then the perturbed risk minimizer is
\begin{equation}
    \hat{\theta}_{\epsilon^\prime} \triangleq \arg \min_{\theta\in\Theta} \frac1n \sum_{i=1}^n l_i(\theta)+\epsilon^\prime l_{i^\prime}(\theta)
\end{equation}

The basic idea of IF is to approximate the change of parameter $\theta$ \cite{cook1980characterizations}:
\begin{equation}\label{phi}
        \psi_{\theta}(z_i)  \triangleq \left. \frac{d\hat{\theta}_{\epsilon}}{d\epsilon} \right|_{\epsilon=0} = -H^{-1}_{\hat{\theta}}\nabla_{\theta} l_i(\hat{\theta}) 
\end{equation}
or test risk on a given test distribution $Q^\prime$ from \cite{koh2017understanding}:
\begin{equation}\label{phi_loss}
    \begin{split}
        \phi(z_i\sim P, z_j\sim Q^\prime)  &\triangleq \left. \frac{dl_j(\hat{\theta}_{\epsilon})}{d\epsilon}\right |_{\epsilon=0} \\
&=-\nabla_{\theta}l_j(\hat{\theta})^{\top}H^{-1}_{\hat{\theta}}\nabla_{\theta}l_i(\hat{\theta})
    \end{split}
\end{equation}
where we have $z_i$, $z_j$ from training and test set, and the $H_{\hat{\theta}}\triangleq\frac1n\sum_{i=1}^n \nabla^2_{\theta}l_i(\hat{\theta})$ is the Hessian matrix based on full set risk minimizer $\hat{\theta}$, which is positive definite (PD) if the empirical risk is twice-differentiable and strictly convex in $\theta$.

\section{Methodology}
The key challenge lies on that the $\hat{\theta}$ may not be the best risk minimizer corresponding to $Q^{\prime}$, due to the distribution shift between $P$ and $Q^\prime$ and unknown noisy samples in the training set. With the advent of IF, we can measure one sample's influence on test distribution without prohibitive leave-one-out training. The essential philosophy here is, given a test distribution $Q^\prime$, some samples in training set cause increasing test risk. If they are downweighted, accordingly we can have less test risk than before, namely $\mathcal{R}_{\tilde{\theta}}(Q^\prime) \leq \mathcal{R}_{\hat{\theta}}(Q^\prime)$ where the $\tilde{\theta}$ is the new model learned after some harmful samples are downweighted.

\subsection{Subset can be better}

Considering there are $n$ perturbations put on each training sample: $\vec{\epsilon}=(\epsilon_1,\epsilon_2,...,\epsilon_n)^{\top}$, the perturbed risk minimizer is denoted as $\hat{\theta}_\epsilon$.
Given $m$ samples $\{z_j\}_{j=1}^m$ from another distribution $Q^{\prime}$, the objective is to design the $\vec{\epsilon}$ that minimizes the test risk $\mathcal{R}_{\hat{\theta}_\epsilon}(Q^\prime)=\frac1m \sum_{j=1}^ml_j(\hat{\theta}_{\epsilon})$.

According to the definition of IF in Eq. \eqref{phi_loss}, we can approximate the loss change of $z_j\sim Q^\prime$ if $z_i\sim P$ is upweighed by a $\epsilon_i$:
\begin{equation}
    l_j(\hat{\theta}_{\epsilon_i})-l_j(\hat{\theta})\approx\epsilon_i \times \phi(z_i,z_j,\hat{\theta})
\end{equation}
which can be extended to the whole test distribution as following:
\begin{equation}
\begin{split}
   \mathcal{R}_{\hat{\theta}_{\epsilon_i}}(Q^\prime)-\mathcal{R}_{\hat{\theta}}(Q^\prime) &= \frac1m \sum_{j=1}^m(l_j(\hat{\theta}_{\epsilon_i})-l_j(\hat{\theta})) \\
   & \approx \epsilon_i \times \frac1m\sum_{j=1}^m\phi(z_i,z_j,\hat{\theta})
\end{split}
\end{equation}

For convenience, we use $\phi_i(\hat{\theta})\triangleq \sum_{j=1}^m\phi(z_i,z_j,\hat{\theta})$ to indicate one training sample $z_i$'s influence over whole $Q^\prime$. Therefore, with the $\hat{\theta}_\epsilon$ under the perturbation of $\vec{\epsilon}$, the test risk change can be approximated as following\footnote{We assume that all elements in $\vec{\epsilon}$ are small, and each training sample $z_i$ influences the test risk independently, hence we add up all terms linearly for simplicity of implementation.}:
\begin{equation} \label{testriskminimize}
\begin{split}
    \mathcal{R}_{\hat{\theta}_{\epsilon}}(Q^\prime)-\mathcal{R}_{\hat{\theta}}(Q^\prime) &= \frac1m\sum_{i=1}^n\sum_{j=1}^m (l_j(\hat{\theta}_{\epsilon_i})-l_j(\hat{\theta}))  \\
    & \approx \frac1m \sum_{i=1}^n\sum_{j=1}^m\epsilon_i\phi(z_i,z_j,\hat{\theta}) \\
    &= \frac1m \sum_{i=1}^n\epsilon_i\phi_i(\hat{\theta})
\end{split}
\end{equation}

Specifically, suppose when $\pi_i=1$ for $\forall i=1,2,...,n$, the subset is the same as the full set. In this situation, the $\hat{\theta}_{\epsilon}=\hat{\theta}$ such that $l_j(\hat{\theta}_\epsilon)-l_j(\theta)=0$ for $\forall j=1,2..,m$. Based on this analysis, we have the Lemma \ref{avgphi}. For clear notion, we use bold letters to represent random variables, such that the $\phi_i(\hat{\theta})$ and $\epsilon_i$ are the realization of the random variables $\bm{\phi}$ and $\bm{\epsilon}$, respectively.

\begin{lemma}\label{avgphi}
The expectation of the influence function $\phi_i(\hat{\theta})$ over training distribution $P$ is always 0, which means:
\begin{equation}
    \mathbb{E}_{P}(\bm{\phi})\approx\frac1n\sum_{i=1}^n\phi_i(\hat{\theta})=0
\end{equation}
\end{lemma}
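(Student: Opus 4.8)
The plan is to reduce the identity to the \emph{first-order optimality condition} satisfied by the full-set risk minimizer. Since $\hat\theta \triangleq \arg\min_{\theta}\frac1n\sum_{i=1}^n l_i(\theta)$ and the empirical risk is assumed twice-differentiable and strictly convex in $\theta$, the minimizer is an interior stationary point, so that $\nabla_{\theta}\big(\frac1n\sum_{i=1}^n l_i(\hat\theta)\big)=0$, i.e. $\sum_{i=1}^n\nabla_{\theta}l_i(\hat\theta)=\bm{0}$. This stationarity is essentially the only ingredient the argument needs.

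Next I would substitute the definitions $\phi_i(\hat\theta)=\sum_{j=1}^m\phi(z_i,z_j,\hat\theta)$ and, from Eq.~\eqref{phi_loss}, $\phi(z_i,z_j,\hat\theta)=-\nabla_{\theta}l_j(\hat\theta)^{\top}H_{\hat\theta}^{-1}\nabla_{\theta}l_i(\hat\theta)$, so that
\begin{equation}
\frac1n\sum_{i=1}^n\phi_i(\hat\theta)=-\frac1n\sum_{i=1}^n\sum_{j=1}^m\nabla_{\theta}l_j(\hat\theta)^{\top}H_{\hat\theta}^{-1}\nabla_{\theta}l_i(\hat\theta).
\end{equation}
Because the sums are finite I can interchange the order of summation, and since neither the test gradient $\nabla_{\theta}l_j(\hat\theta)$ nor the inverse Hessian $H_{\hat\theta}^{-1}$ (both evaluated at the fixed point $\hat\theta$) depends on the index $i$, I factor them out of the inner sum:
\begin{equation}
\frac1n\sum_{i=1}^n\phi_i(\hat\theta)=-\frac1n\sum_{j=1}^m\nabla_{\theta}l_j(\hat\theta)^{\top}H_{\hat\theta}^{-1}\Big(\sum_{i=1}^n\nabla_{\theta}l_i(\hat\theta)\Big).
\end{equation}
Invoking the optimality condition, the parenthesised vector is $\bm{0}$, so the whole right-hand side vanishes, giving the claimed identity $\frac1n\sum_{i=1}^n\phi_i(\hat\theta)=0$. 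The remaining ``$\approx$'' in the statement is simply the Monte-Carlo approximation $\mathbb{E}_P(\bm\phi)\approx\frac1n\sum_{i=1}^n\phi_i(\hat\theta)$ over the i.i.d.\ training draws; the right-hand equality is exact and holds for any sample size.

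There is no serious obstacle. The only point that warrants care is the factoring step: one should verify that the quantities independent of $i$ are pulled outside the $i$-sum \emph{before} applying stationarity, which is just a bookkeeping check on the order of summation. For completeness one may also remark that $H_{\hat\theta}$ is invertible by the stated positive-definiteness assumption, so $H_{\hat\theta}^{-1}$ is well defined and every manipulation above is legitimate.
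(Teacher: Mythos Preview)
Your proof is correct but takes a different route from the paper's. The paper argues via the finite-difference interpretation of the influence function: it rewrites $\phi(z_i,z_j,\hat\theta)\approx\frac{1}{\epsilon_i}\bigl(l_j(\hat\theta_\epsilon)-l_j(\hat\theta)\bigr)$ and then observes that when every $\pi_i=1$ (equivalently every $\epsilon_i=0$) the perturbed minimizer $\hat\theta_\epsilon$ coincides with $\hat\theta$, so each summand vanishes. You instead plug in the closed-form expression $\phi(z_i,z_j,\hat\theta)=-\nabla_\theta l_j(\hat\theta)^\top H_{\hat\theta}^{-1}\nabla_\theta l_i(\hat\theta)$ and invoke the first-order optimality condition $\sum_{i=1}^n\nabla_\theta l_i(\hat\theta)=\bm{0}$ directly. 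Your route has two advantages: the right-hand equality comes out exact rather than through an additional ``$\approx$'' step, and it sidesteps the awkwardness that the paper's finite-difference quotient places $\epsilon_i$ in the denominator precisely in the case $\epsilon_i=0$. The paper's argument, on the other hand, ties the lemma more explicitly to the surrounding narrative about sampling with $\pi_i=1$ leading back to the full-set model.
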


According to the Eq. \eqref{testriskminimize}, minimizing test risk is equivalent to minimizing the objective function $\frac1n\sum_{i=1}^n\epsilon_i\phi_i(\hat{\theta})$. Actually, this objective function is empirical form of the $\mathbb{E}_{P}(\bm{\phi} \times \bm{\epsilon})$ from which we derive the Lemma \ref{nonneg}.

\begin{lemma}\label{nonneg}
The subset-model $\hat{\theta}_\epsilon$
performs not worse than the full-set-model $\hat{\theta}$ in terms of test risk $\mathcal{R}(Q^\prime)$ if $\bm{\epsilon}$ and $\bm{\phi}$ are negative correlated:
\begin{equation}
  Cov(\bm{\phi},\bm{\epsilon}) \leq 0
\end{equation}
\end{lemma} 

The Lemma \ref{nonneg} gives instruction that making the perturbation $\bm{\epsilon}$ negative correlated with IF $\bm{\phi}$ can ensure better subset-model $\hat{\theta}_\epsilon$. To this end we can let the $\epsilon(\phi)$ a decreasing function. The proof of Lemma \ref{avgphi} and Lemma \ref{nonneg} can be seen in the supplementary appendix A and B, respectively.

\subsection{Deterministic v.s. Probabilistic sampling}
Similar to the Eq.\eqref{expectlw}, the expectation of $\phi$ on subset via the observation variable $O$ can be acquired by:
\begin{equation}\label{expphi}
    \mathbb{E}_{O}(\sum_{i,O_i=1}^n\phi_i(\hat{\theta}))=\sum_{i=1}^n\mathbb{E}_{O_i}(O_i\times\phi_i(\hat{\theta}))=\sum_{i=1}^n\pi_i\phi_i(\hat{\theta})
\end{equation}

However, the objective function in Eq. \eqref{testriskminimize} is defined on perturbation $\epsilon$ instead of sampling probability $\pi$, so that we need to bridge the gap between them:
\begin{equation}\label{piandepsilon}
\begin{split}
\{\epsilon_i^{\star}\}_{i=1}^n = \arg\min_{\vec{\epsilon}} f(\vec{\epsilon}) &= \arg\min_{\vec{\epsilon}} n(f(\vec{\epsilon})+\frac1n \sum_{i=1}^n \phi_i(\hat{\theta})) \\
    &= \arg\min_{\vec{\epsilon}} \sum_{i=1}^n(n\epsilon_i+1)\phi_i(\hat{\theta})
\end{split}
\end{equation}
The Eq. \eqref{piandepsilon} holds because from Lemma \ref{avgphi} we know the $\mathbb{E}_{P}(\bm{\phi})=0$. Here we assume that $\epsilon\in [-\frac1n,0]$ \footnote{The $\epsilon=0$ means no perturbation is applied, while the $\epsilon=-\frac1n$ means a sample is totally dropped in objective function, here all perturbations are assumed within this interval.}, therefore, if we let $\pi = n\epsilon+1 \in [0,1]$, then the perturbation is transformed to sampling probability, because the Eq.\eqref{expphi} and Eq. \eqref{piandepsilon} are in the same form.

In fact, the Eq.\eqref{expphi} has closed form of optimal $\pi_i$:
\begin{equation} \label{dropoutsamplingfunc}
\pi_i^{\star}= \left\{\begin{array}{rcl}
1 & & {\phi_i \leq 0} \\
0 & & {\phi_i > 0}
\end{array}
\right.
\end{equation}
This form of sampling is termed as \emph{Data dropout} in \cite{wang2018data} while here we call it \emph{Deterministic sampling}, because it simply sets a threshold and selects samples deterministically.  By contrast, sampling with a continuous function $\pi(\phi) \in [0,1]$ is called \emph{Probabilistic sampling} since each sample has a probability to be selected. Most of sampling studies belong to probabilistic sampling methods: \cite{wang2018optimal} builds $\pi^{mMSE}$ and $\pi^{mVc}$ based on \emph{A-optimality} and \emph{L-optimality} respectively, and \cite{ting2018optimal} uses $\pi_i\propto \|\psi_{\theta}(z_i)\|$. 

\subsection{Analysis of sampling functions}
For Dropout method, the sample's influence over a $Q^\prime$ is the essential criterion for sampling, such that the obtained subset-model ends up being optimal only for a $Q^\prime$. However, the subset-model's robustness to distribution shift \cite{HuNSS18} is also a concern. That is, for a set of distributions around the empirical one, whether our subset-model can still maintain its performance. In this work, we postulate that the Influence-based subsampling confronts to trade the subset-model's performance on a specific $Q^\prime$ off its distributional robustness. In this viewpoint, the Dropout method is \emph{overly confident} on a $Q^\prime$ at the expense of deteriorating generalization ability, hence it is reasonable to measure and control this \emph{confidence degree} for our subsampling methods. 

Considering an uncertainty set $\mathcal{Q}=\{Q \mid Q \ll P, D_{\chi^2}(Q\|P)\leq \delta, \delta \geq 0\}$, where the $Q \ll P$ denotes that $Q$ is absolutely continuous w.r.t. $P$, and $D_{\chi^2}(\cdot \| \cdot)$ means $\chi^2$-divergence:
\begin{equation}
    D_{\chi^2}(Q\|P) = \mathbb{E}_P[\frac12(1-\frac{dQ}{dP})^2]
\end{equation}
The $\mathcal{Q}$ is a $\chi^2$-divergence distribution ball, indicating all neighborhoods of the empirical distribution $P$. The worst-case risk $\mathcal{R}_{\hat{\theta}_\epsilon}(Q)$ is defined as the supremum of the risk over any $Q\in\mathcal{Q}$ \cite{Bagnell05}:
\begin{equation} \label{suprisk}
    \mathcal{R}_{\hat{\theta}_\epsilon}(Q) \triangleq \sup_{Q\in\mathcal{Q}} \{ \mathbb{E}_Q [l(\hat{\theta}_\epsilon;Z)]\}
\end{equation}
From the \cite{Duchi2018LearningMW}, the dual form of the Eq. \eqref{suprisk} is:
\begin{equation} \label{infrisk}
    \mathcal{R}_{\hat{\theta}_\epsilon}(\eta) = \inf_{\eta \in \mathbb{R}} \left \{ \sqrt{2\delta+1}\times \mathbb{E}_P[(l(\hat{\theta}_\epsilon;Z)-\eta)_{+}^2]^{\frac12} + \eta \right \}
\end{equation}
where the $\eta$ is the dual variable. This duality transforms the supremum in Eq.\eqref{suprisk} to a convex function on the empirical distribution $P$, thus allowing us to measure the worst-case risk quantitatively. Before focusing on analyzing how the worst-case risk $\mathcal{R}_{\hat{\theta}_\epsilon}(Q)$ changes with the different sampling functions in Theorem \ref{liprisk}, we need to introduce two terms:

\begin{definition}
A function f(x): $\mathbb{R}^d\to\mathbb{R}$ is said to be Lipschitz continuous with constant $\xi$ if
$\|\nabla f(x)-\nabla f(y)\| \leq \xi \|x-y\|, \forall x,y \in \mathbb{R}^d$.
\end{definition}

\begin{definition}
A function f(x) has $\sigma$-bounded gradients if $\|\nabla f(x)\|\leq\sigma$ for all $x\in \mathbb{R}^d$.
\end{definition}

\begin{theorem}\label{liprisk}
Let $\eta^{*}$ the optimal dual variable $\eta$ that achieves the infimum in the Eq. \eqref{infrisk}, and the perturbation function $\epsilon(\phi)$ has $\sigma$-bounded gradients. Then, the worst-case risk $\mathcal{R}_{\hat{\theta}_\epsilon}(\eta^{*})$ is a Lipschitz continuos function w.r.t. the IF vector $\vec{\phi}=(\phi_1,\phi_2,..,\phi_n)^{\top}$ where we have the Lipschitz constant $\xi=\mathcal{O}(\sigma\frac{\sqrt{2\delta+1}}{n})$, that is
\begin{equation*}
    \|\nabla_{\vec{\phi}}\mathcal{R}_{\hat{\theta}_\epsilon}(\eta^*)\| \leq \sigma\frac{\sqrt{2\delta+1}}{n} \times \sqrt{\sum_{i=1}^n \phi_i^2}
\end{equation*}
\end{theorem}

The Theorem \ref{liprisk} relate the change rate of the worst-case risk to the gradient bound $\sigma$ of the perturbation function $\epsilon(\phi)$. For the Dropout method, its sampling function Eq.\eqref{dropoutsamplingfunc} has unbounded gradient since it is inconsistent at the zero point, causing the $\sigma \to \infty$. This property makes $\mathcal{R}_{\hat{\theta}_\epsilon}(\eta^*)$ no longer Lipschitz continuous and suffers sharp fluctuation. By contrast, our probabilistic methods can adjust the confidence degree by tuning the $\sigma$. This is crucial to avoid over confidence on a specific $Q^\prime$ that leads to large risk on other $Q \in \mathcal{Q}$. In fact, our experiments bear out that our probabilistic methods maintain its performance out-of-sample with proper $\sigma$, while the Dropout method often crashes. The proof of Theorem \ref{liprisk} can be found in Appendix C.

\subsection{Surrogate metric for confidence degree}
Nevertheless, we find that $\sigma$ is the determinant of confidence degree, it is still intractable to measure this degree quantitatively, which is important to guide our methods use in practice. Empirically, to deal with over fitting, practitioners prefer adding constraints on the model's parameters $\theta$, e.g. $l_2$-norm regularizer. In our theory, we propose to apply the $\Gamma(\vec{\phi})=\|\hat{\theta}_\epsilon - \hat{\theta}\|^2$ to evaluate the confidence degree over a specific $Q^\prime$. We term the $\Gamma(\vec{\phi})$ a \emph{surrogate metric} for confidence degree, and prove in Theorem \ref{lip} that it is reasonable because the $\Gamma(\vec{\phi})$ has the same magnitude of Lipschitz constant as the $\mathcal{R}_{\hat{\theta}_\epsilon}(\eta^*)$. In detail, the worst-case risk and our surrogate metric share the same change rate corresponding to the sampling function's gradient bound $\sigma$.

\begin{theorem}\label{lip}
Let the perturbation function $\epsilon(\phi)$ has $\sigma$-bounded gradient, and the $|\epsilon(\phi)|$ is bounded by $\tau\in\mathbb{R}^+$, that is $|\epsilon(\phi)|\leq\tau$. We have the parameter shift $\Gamma(\vec{\phi})=\|\hat{\theta}_\epsilon -\hat{\theta}\|^2$ is Lischitz continuous with its Lipschitz constant $\xi=\mathcal{O}(\sigma\tau)$. Specifically for $\tau=\frac1n$, we have $\xi = \mathcal{O}(\frac{\sigma}n)$.
\end{theorem}

The Theorem \ref{lip} is helpful to measure our sampling methods confidence degree in practice if the radius $\delta$ of the $\chi^2$-divergence ball is unknown. Theoretically, a relatively small $\sigma$ guarantees more robust model. The proof of the Theorem \ref{lip} can be found in the supplementary appendix D. 

\begin{figure}[htbp]
\centering
\includegraphics[width=6.5cm]{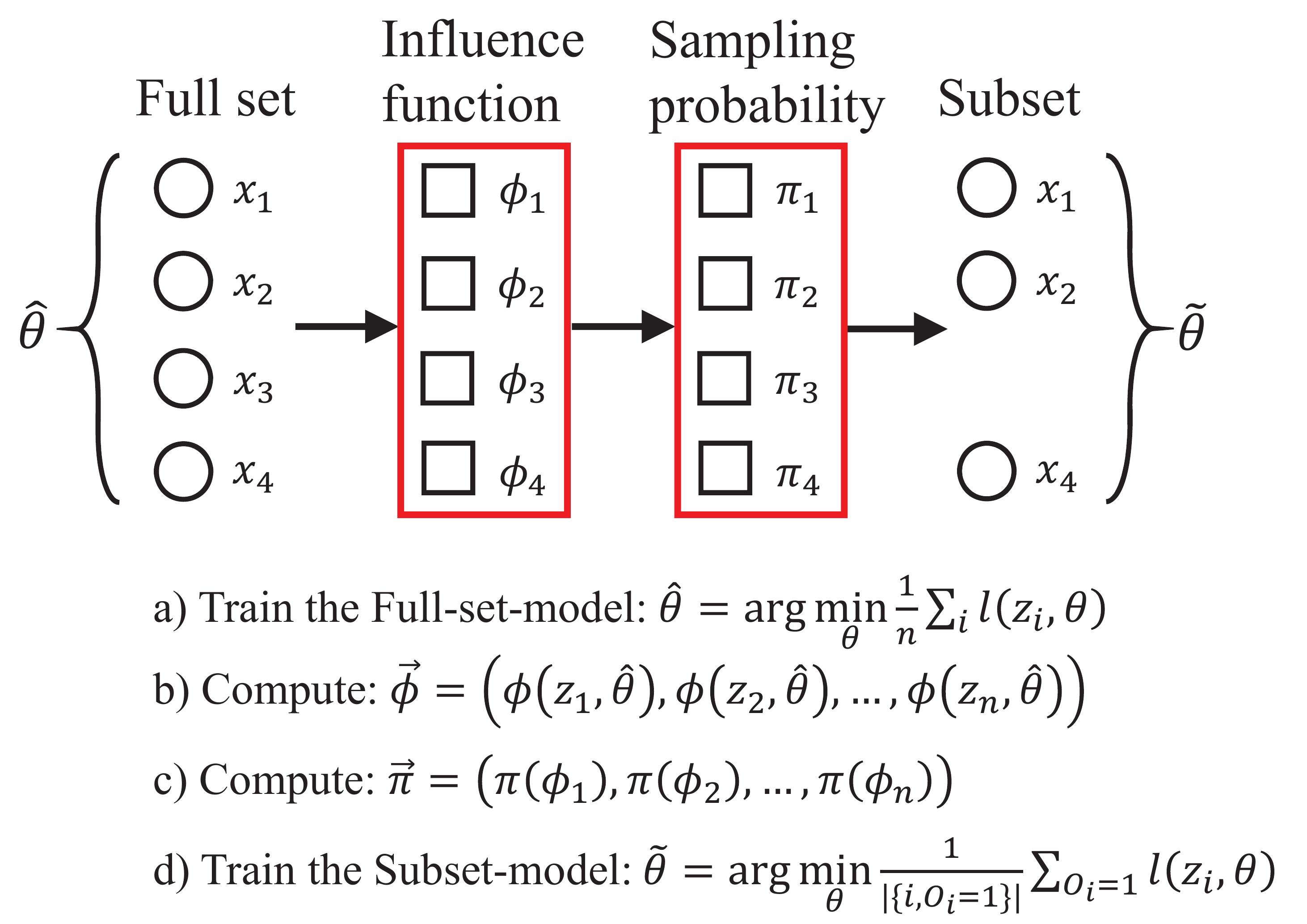}
\caption{Our unweighted subsampling framework.}
\label{framework}
\end{figure}

\section{Implementation}
In this section, the unweighted subsampling method is incorprated in our framework, shown in Fig.\ref{framework}: we train $\hat{\theta}$ with the full set data, and calculate the IF vector $\vec{\phi}=(\phi_1,\phi_2,...,\phi_n)$ on training set with the $\hat{\theta}$. The sampling probabilities are acquired with the designed probabilistic sampling function $\pi(\cdot)$ afterwards. We will discuss the two basic modules of this framework: 1) calculating the IF and 2) designing probabilistic sampling functions.

\subsection{Calculating influence functions}
The IF in Eq.\eqref{phi_loss} can be calculated in two steps: first calculating the \emph{inverse Hessian-vector-product} (HVP) $\nabla_{\theta}l_j(\hat{\theta})^{\top}H^{-1}_{\hat{\theta}}$, then multiply it with the $\nabla_{\theta}l_i(\hat{\theta})$ for each training sample. To handle the sparse scenarios when the $H_{\hat{\theta}}$ has high dimensions, \cite{martens2010deep} proposes to transform the inverse HVP into an optimization problem: $H^{-1}_{\hat{\theta}}v \equiv \arg \min_{t}\{\frac12 t^{\top} H_{\hat{\theta}}t-v^{\top}t\}$, and solve it with Newton conjugate gradient (Newton-CG) method. Moreover, the \cite{agarwal2017second} proves that the stochastic estimation makes the calculation feasible when the loss function $l(\cdot)$ is non-convex in $\theta$. These works ensure our framework's feasibility in both convex and non-convex scenarios. Without loss of generality, we mainly focus on convex scenarios.

When the CG converges slowly because of ill-conditioned sub-problem, the mixed preconditioner $\bar{M}$ is useful to reduce CG steps\cite{nash1985preconditioning,hsia2018preconditioned}:

\begin{equation}
    \bar{M}=\alpha\times diag(H_\theta)+(1-\alpha)\times I
\end{equation}
where the $\alpha$ is a weight parameter, $I$ is the identical matrix and $H_\theta$ is the Hessian matrix. Specifically for logistic regression model, its diagonal elements on $H_\theta$ is:
\begin{equation}
    (H_\theta)_{kk}=1+C\sum_{i=1}^n(\hat{y}_i-y_i)x_{ik}^2
\end{equation}
where the $C$ is the regularization parameter. Our experiments demonstrate that the Mixed PCG is efficacious for speeding up the calculation of IF.

\begin{figure}[t]
\centering
\includegraphics[width=9.0cm]{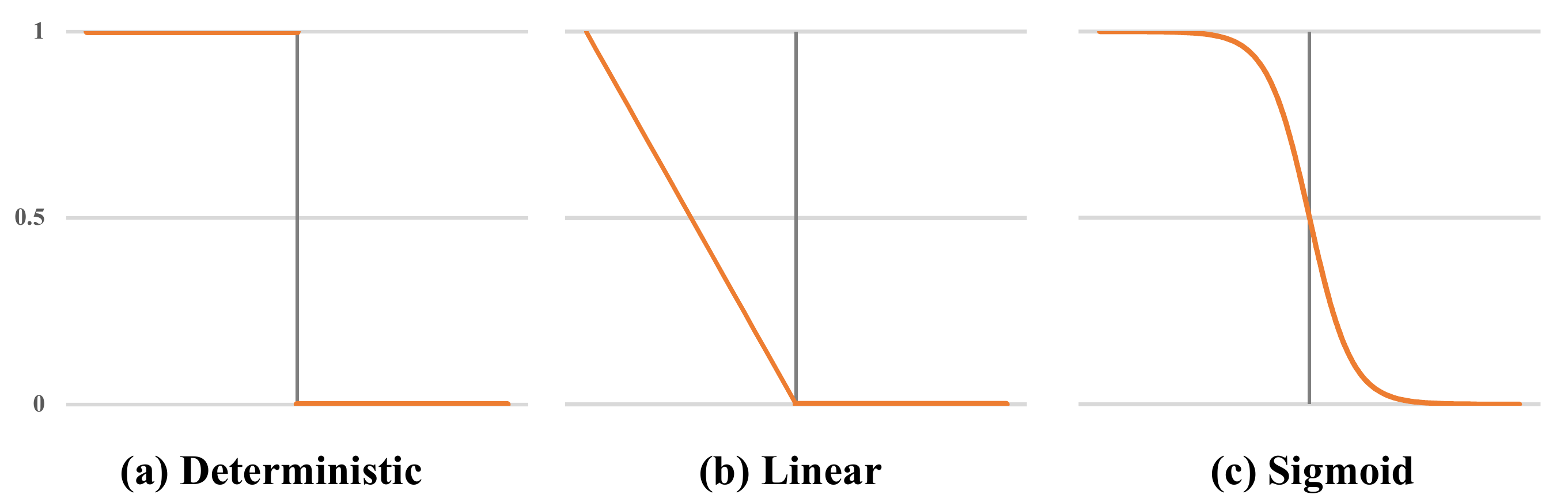}
\caption{A family of sampling functions, the $x$ axis is value of influence function $\phi$ and the $y$ axis is the probability $\pi$.}
\label{fig:sampling}
\end{figure}

\subsection{Probabilistic sampling functions}\label{samplingfunctions}
From Lemma \ref{nonneg}, better subset-model is ensured with a decreasing function $\pi(\phi)$. Furthermore, the Theorem \ref{liprisk} and \ref{lip} prove that the gradient bound $\sigma$ of $\pi(\phi)$ can be adjusted for confidence degree over a $Q^{\prime}$. We can design a family of probabilistic sampling functions with tunable hyper parameter w.r.t. $\sigma$. Here we develop two basic functions, termed as \emph{Linear sampling} and \emph{Sigmoid sampling}.

\subsubsection{Linear sampling.}
Inspired by \cite{ting2018optimal}, which builds $\pi_i \propto \| \psi_\theta(z_i) \|$ as $\pi_i = \max\{\alpha,\min\{1,\lambda\|\psi_\theta(z_i)\|\} \}$, we design a Linear sampling function where we let $\pi_i \propto -\phi_i(\hat{\theta})$:
\begin{equation}
    \pi_i=n\epsilon(\phi_i)+1=\max\{0,\min\{1,-\alpha\phi_i\}\}
\end{equation}
where $\alpha \in \mathbb{R}^+$. It is easy to prove the gradient bound $\sigma$ of $\pi(\phi)$ is $\alpha$, thus the degree of confidence relies on the $\alpha$ for the Linear sampling. It is a little different from the $\pi_i\propto \|\psi_\theta(z_i)\|$, because the $\phi_i$ can be both negative and positive, which means many samples can have zero probability to be sampled. If we set a relatively high sampling ratio, like $90\%$ or higher, we will never get enough samples in our subset. Empirically, we find randomly picking up the negative $\pi$ samples reaches relatively good results.

\subsubsection{Sigmoid sampling.}
The Sigmoid function is generally used in logistic regression to scale the outputs into $[0,1]$, which indicates probability of each class, such that here we can use it to transform $\phi_i$ to probability as following:
\begin{equation}
    \pi_i=n\epsilon(\phi_i)+1=\frac1{1+e^{\frac{\alpha\phi_i}{\max(\{\phi_i\})-\min(\{\phi_i\})}}}
\end{equation}
where $\alpha\in\mathbb{R}^+$. For the Sigmoid function, we can still adjust the $\alpha$ to make the probability distribution more flat or steep, thereby control the confidence degree.

\section{Experiments}
In this section, we present data sets and experiment settings at first, and introduce several baselines for comparison. After that, we do experiments to evaluate our methods in terms of effectiveness, robustness and efficiency.

\subsection{Data sets}
We perform extensive experiments on various public data sets which conclude many domains, including computer vision, natural language processing, click-through rate prediction, etc. Additionally, we test the methods on the \textbf{Company} data set which contains around \textbf{100 million} samples with over \textbf{10 million} features. They are queries collected from a real world recommender system, whose feature set contains user history behaviour, item's side information and contextual information, such as time and location. These data sets range from small to large, from low dimensions to high dimensions, which can testify the methods effectiveness and robustness in diverse scenarios. The data set statistics and more details about preprocessing on some data sets are described in appendix E.

\begin{table*}[htbp]
  \centering
  \begin{threeparttable}
  \caption{Average logloss evaluated on the out-of-sample Te set when sampling ratio is set to $95\%$.}
 \setlength{\tabcolsep}{4mm}{
    \begin{tabular}{lrrrrrr}
    \toprule
          & \multicolumn{1}{l}{Full set} & \multicolumn{1}{l}{Random} & \multicolumn{1}{l}{OptLR} & \multicolumn{1}{l}{Dropout} & \multicolumn{1}{l}{Lin-UIDS\tnote{*}} & \multicolumn{1}{l}{Sig-UIDS\tnote{*}} \\
    \midrule
    UCI breast-cancer & 0.0914 & 0.0944 & 0.0934 & \underline{\textbf{0.0785}} & \textbf{0.0873} & \textbf{0.0803} \\
    diabetes & 0.5170 & 0.5180 & 0.5232 & \textbf{0.5083} & \textbf{0.5127} & \underline{\textbf{0.5068}} \\
    News20 & 0.5130 & 0.5177 & 0.5203 & \underline{\textbf{0.5072}} & \textbf{0.5100} & \textbf{0.5075} \\
    UCI Adult & 0.3383 & 0.3386 & 0.3549 & 0.3538 & 0.3384 & \underline{\textbf{0.3382}} \\
    cifar10 & 0.6847 & 0.6861 & 0.7246 & 0.6851 & \textbf{0.6822} & \underline{\textbf{0.6819}} \\
    MNIST & 0.0245 & 0.0247 & \textbf{0.0239} & \underline{\textbf{0.0223}} & \textbf{0.0245} & \textbf{0.0231} \\
    real-sim & 0.2606 & 0.2668 & 0.2644 & \underline{\textbf{0.2605}} & 0.2607 & 0.2609 \\
    SVHN  & 0.6129 & \textbf{0.6128} & 0.6757 & 0.6328 & \underline{\textbf{0.6122}} & \textbf{0.6128} \\
    skin-nonskin & 0.3527 & \underline{\textbf{0.3526}} & 0.3529 & 0.4830 & 0.3713 & \textbf{0.3527} \\
    Criteo1\% & 0.4763 & 0.4768 & 0.4953 & 0.4786 & \underline{\textbf{0.4755}} & \textbf{0.4756} \\
    Covertype & 0.6936 & \textbf{0.6933} & \textbf{0.6907} & 0.7745 & \underline{\textbf{0.6872}} & \textbf{0.6876} \\
    Avazu-app &  0.3449 & 0.3449 &  0.3450 & 0.3576 & \textbf{0.3446} &  \underline{\textbf{0.3446}} \\ 
    Avazu-site &  0.4499 & 0.4499  & 0.4505 & 0.5736 & \textbf{0.4490} & \underline{\textbf{0.4486}}  \\
    Company & 0.1955 & 0.1956 & 0.1958 & 0.1964 & \underline{\textbf{0.1952}} & \textbf{0.1953} \\
    \bottomrule
    \end{tabular}
}
    \label{95results}
        \begin{tablenotes}
      \item[*] The UIDS is the abbreviation of our \textbf{U}nweighted \textbf{I}nfluence \textbf{D}ata \textbf{S}ubsampling methods. The Lin- and Sig- indicates the incorporated Linear and Sigmoid sampling functions, respectively.
      \end{tablenotes}
    \end{threeparttable}
\end{table*}

\begin{figure}[t]
\centering
\includegraphics[width=8.0cm]{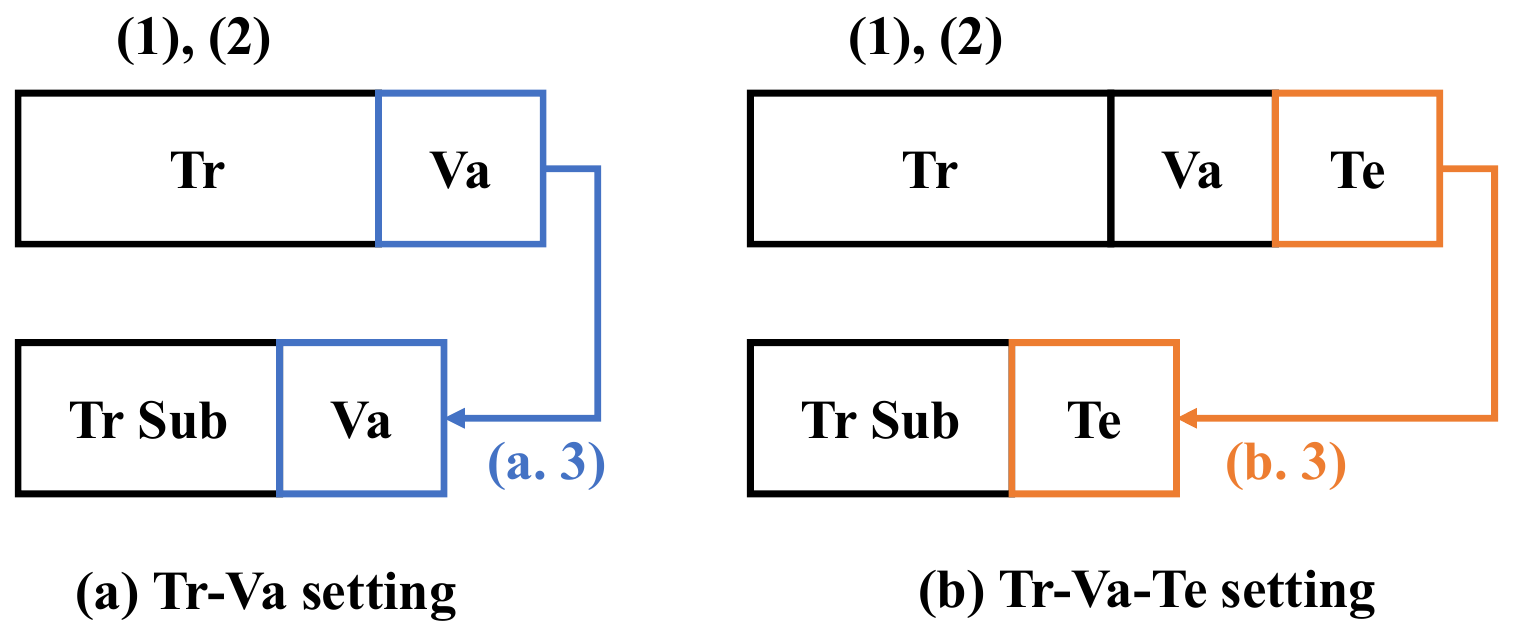}
\caption{Tr-Va v.s. Tr-Va-Te setting.}
\label{comparison}
\end{figure}

\subsection{Considered baselines}
We select the model trained on full data set and other three sampling approaches as the baselines for comparison: Optimal sampling \cite{ting2018optimal}, Data dropout \cite{wang2018data} and Random sampling. 

\begin{enumerate}[A.]
    \item \textbf{Optimal sampling.} It is a recent research in weighted subsampling that uses sampling probability proportional to $\|\psi_{\theta}\|$: $\pi_i = \max\{\alpha,\min\{1,\lambda\|\psi_\theta(z_i)\|\} \}$. This method aims at best approaching full set performance such that it cannot overtake full set theoretically.
    \item \textbf{Data dropout.} It is an unweighted subsampling approach which adopts a simple sampling strategy that is dropping out unfavorable samples whose $\phi_i>0$.
    \item \textbf{Random sampling.} It is simple random selection on Tr which means all samples share same probability. Theoretically, this strategy cannot win over full set as well.
\end{enumerate}

\subsection{Experiment protocols}
In our experiments, we use a Tr-Va-Te setting which is different from the Tr-Va setting as many previous work do (see the Fig. \ref{comparison}). Both settings proceed in three steps, and share the same first two steps: 1) training model $\hat{\theta}$ on the full Tr, predicting on the Va, then computing the IF; 2) getting sampling probability from the IF, doing sampling on Tr to get the subset, then acquiring the subset-model $\tilde{\theta}$. In the third step, we introduce an additional out-of-sample test set (Te) to test the $\tilde{\theta}$ (step (b.3)) rather than testing the $\tilde{\theta}$ on the Va (step (a.3)). The reason is that if we use the $\hat{\theta}$'s validation loss on the Va to guide the subsampling and then train the subset-model $\tilde{\theta}$, the testing result of $\tilde{\theta}$ on the Va cannot convince us of its generalization ability.

\begin{figure}	
	\centering
	\begin{subfigure}[t]{.2\textwidth}
		\centering
		\includegraphics[width=4.0cm]{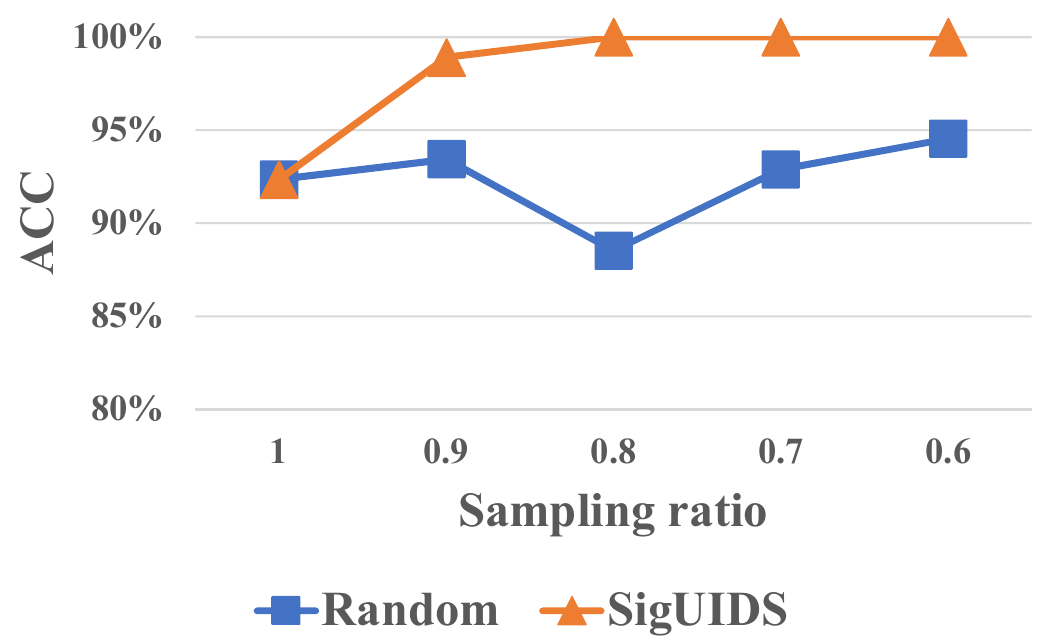}
		\caption{Breast cancer}	
	\end{subfigure}
	\quad
	\begin{subfigure}[t]{.2\textwidth}
		\centering
		\includegraphics[width=4.0cm]{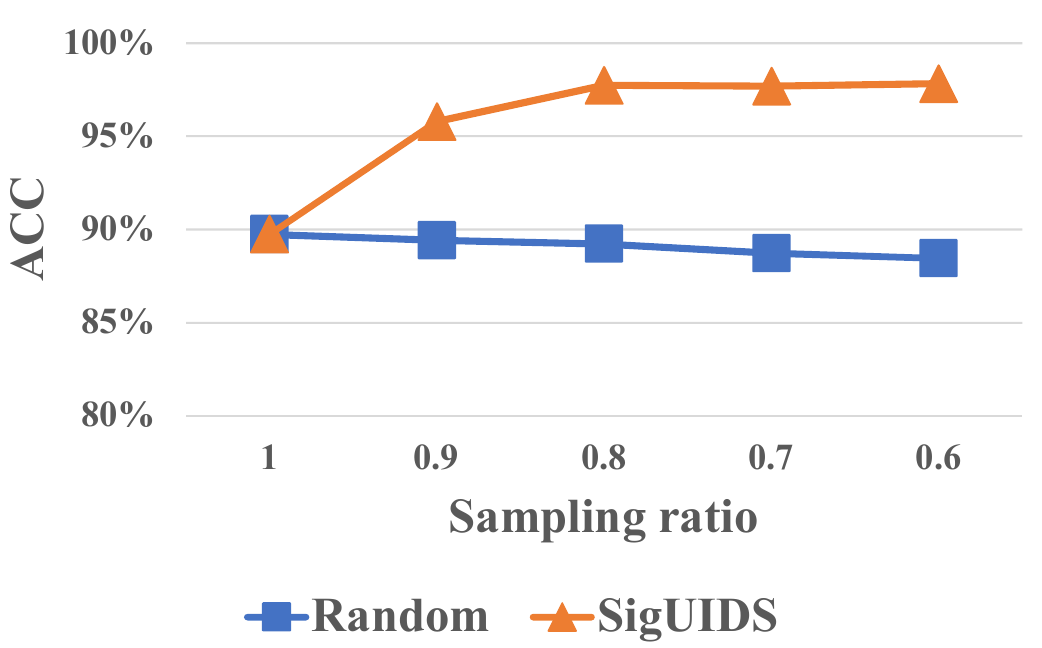}
		\caption{MNIST}
	\end{subfigure}
	\caption{Test ACC with noisy labels (40\% being flipped).}\label{fig:noisylabel}
\end{figure}

In fact, our framework is applicable for both convex and non-convex models, and we mainly focus on subsampling theory in this work. For implementation simplicity, we use logistic regression in all experiments. Besides, to ensure that our methods indeed achieve good performance in terms of the metrics they optimized for, i.e., we use the logistic loss for computing the influence function, such that the logloss is used as the metric in all experiments. More details about experimental settings can be found in appendix F.

\begin{figure}	[htbp]
	\centering
	\begin{subfigure}[t]{.2\textwidth}
		\centering
		\includegraphics[width=4.0cm]{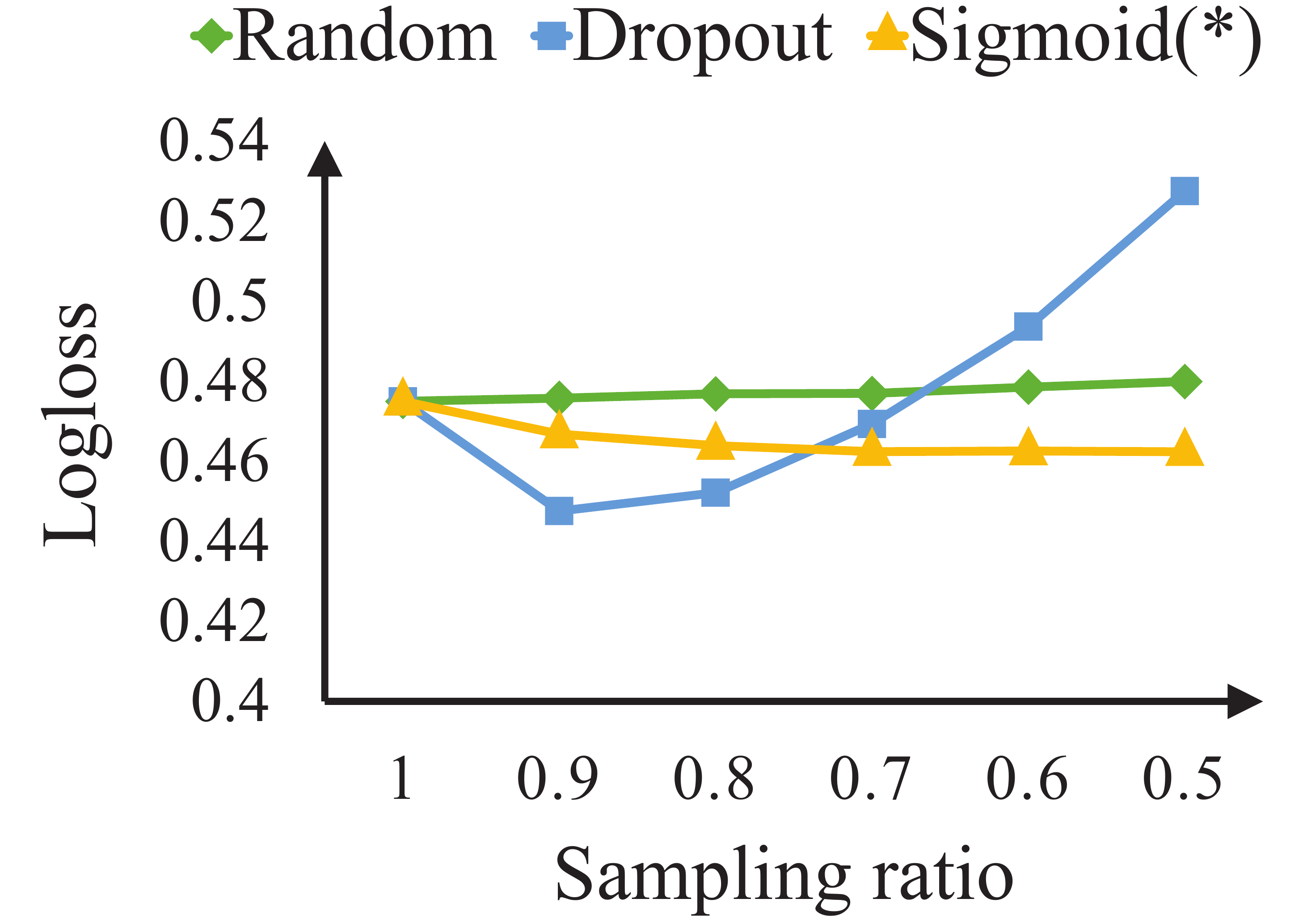}
		\caption{Validation logloss}	
	\end{subfigure}
	\quad
	\begin{subfigure}[t]{.2\textwidth}
		\centering
		\includegraphics[width=4.0cm]{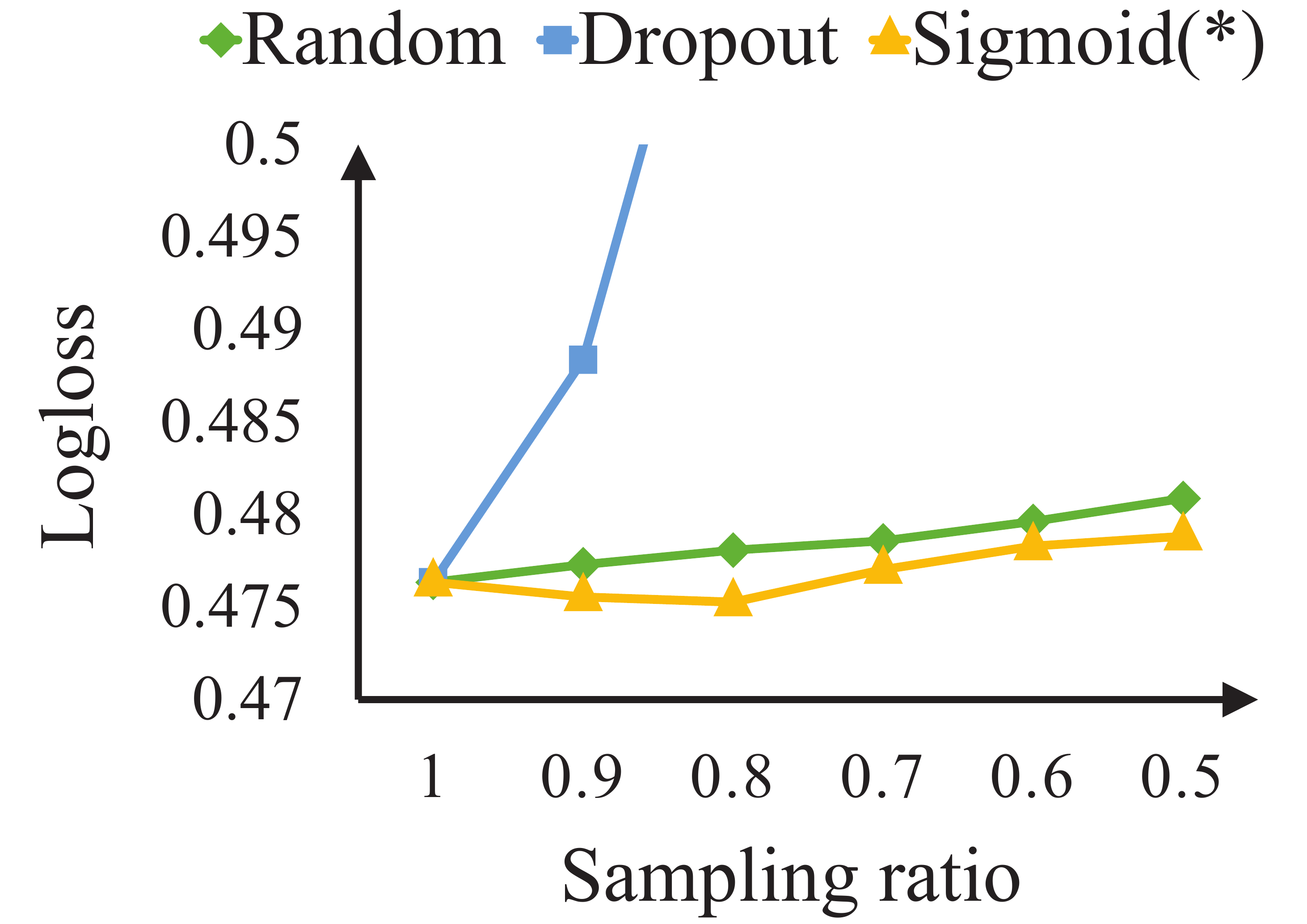}
		\caption{Test logloss}
	\end{subfigure}
	\caption{Comparison on logloss between Va set and Te set.}\label{fig:trvate}
\end{figure}

\begin{figure}[t]
\centering
\includegraphics[width=4.0cm]{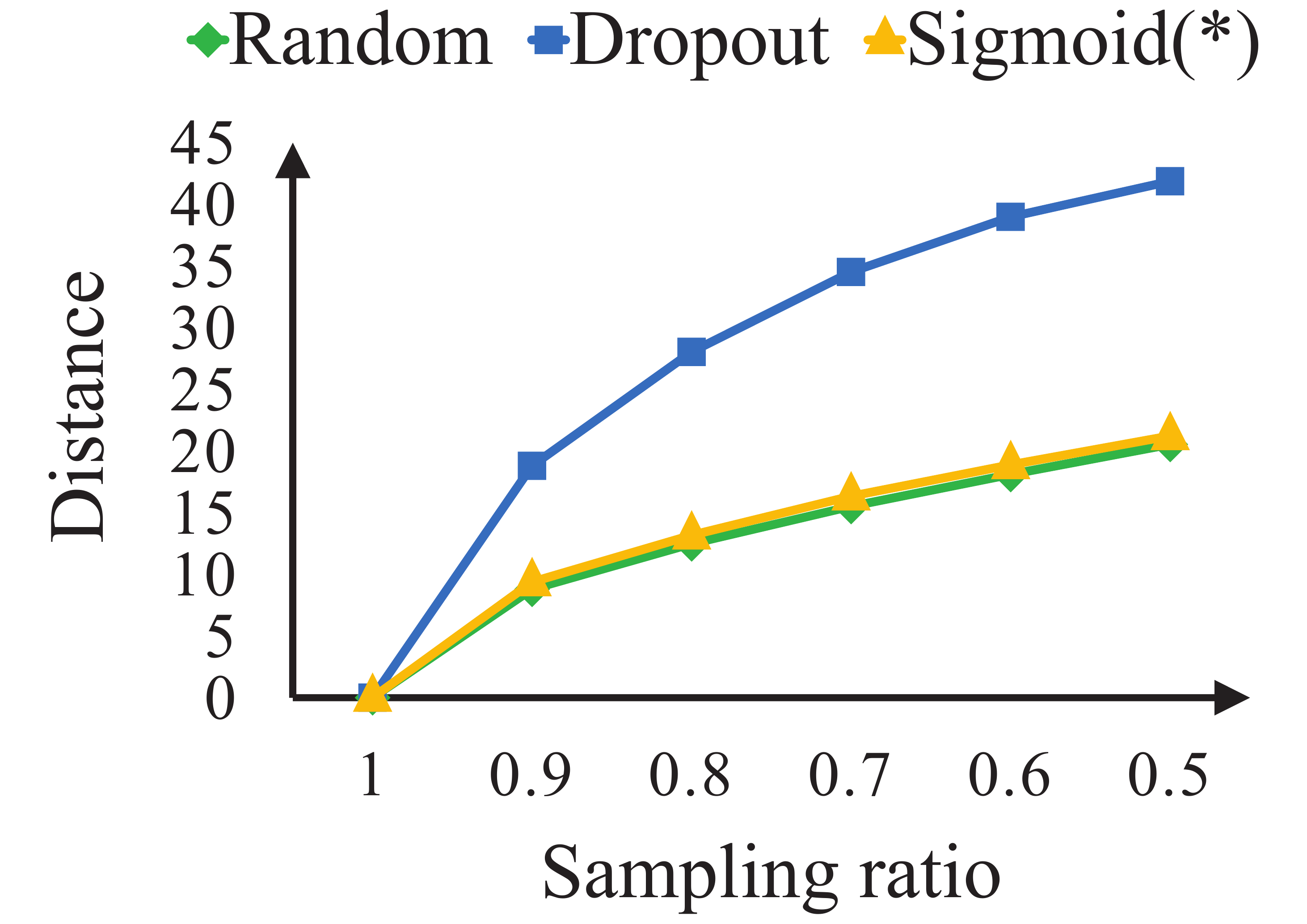}
\caption{The confidence degree $\|\tilde{\theta}-\hat{\theta}\|_2^2$.}
\label{diffparam}
\end{figure}

\subsection{Experiment observations}
\subsubsection{Result 1: Effectiveness.}
The experimental results are shown in the Table \ref{95results} where sampling ratio is set as $95\%$ for all sampling methods. The average test logloss \footnote{repeat 10 times sampling from the same Tr set} values are listed in each column. The bold letters indicate logloss less than the full-set-model, and the underlying ones are the best across the row. It can be seen that our Sig- and Lin-UIDS overhaul the full-set-model on most of data sets, while Dropout often fails. Besides, due to the high variance incurred by weight term, the OptLR method severely suffers. In a nutshell, the Sig-UIDS performs the best on 5 of 14 datasets, and both Lin-UIDS and Dropout achieves 4. That means over-confidence sometimes is beneficial on those homogeneous data sets, e.g. the MNIST, but the Dropout fails on all relatively large-scale and heterogeneous data sets. The probabilistic sampling methods have universal superiority over others, since it keeps robustness on a set of distributions $\mathcal{Q}$, rather than a specific $Q^{\prime}$ (the Va set).

Our unweighted method can downweight the bad cases which cause high test loss to the our model, which is an important reason of its ability to improve result with less data. To show the performance of our methods in noisy label situation, we perform addtional experiments with some training labels being flipped. The results show the enlarging superiority of our subsampling methods in Fig. \ref{fig:noisylabel}.
\begin{table}[t]
  \centering
  \setlength{\tabcolsep}{1.5mm}{
  \begin{threeparttable}[b]
  \caption{Time costs of computing the influence function on whole training set.}
    \begin{tabular}{lrrr}
    \toprule
    Dataset & \multicolumn{1}{l}{\# samples} & \multicolumn{1}{l}{\# features} & \multicolumn{1}{l}{Time cost (sec)} \\
    \midrule
    diabetes\tnote{a} &         768  &          8  & 0.03 \\
    news20\tnote{a} &      19,954  &   1,355,192  & 2.77 \\
    cifar10\tnote{a} &      60,000  &       3,072  & 5.85 \\
    criteo1\%\tnote{a}&     456,674  &   1,000,000  & 38.85 \\
    Avazu-app\tnote{b} &  14,596,137  &   1,000,000  &  166.56 \\
    Avazu-site\tnote{b} &  25,832,830  &   1,000,000  &  310.66 \\
    \bottomrule
    \end{tabular}%
        \begin{tablenotes}
      \item[a] Run on the Intel i7-6600U CPU @2.60GHz.
      \item[b] Run on the Intel Xeon CPU E5-2670 v3 @2.30GHz.
      \end{tablenotes}
    \label{runningtime}%
    \end{threeparttable}}
\end{table}%

\subsubsection{Result 2: Robustness.}

In Fig. \ref{fig:trvate}, we can see that the Dropout method performs very well on Va set, however, it fails in out-of-sample test. To illustrate how the proposed surrogate metric for confidence degree works, we set sampling ratio from large to small, then observe how the surrogate metric $\Gamma(\vec{\phi})$ changes. See in Fig. \ref{diffparam}, the Dropout causes large shift, while our Sig-UIDS has as small shift as the Random sampling. This phenomenon coincides with our Theorem \ref{lip} that the Lipschitz constant $\xi=\mathcal{O}(\frac{\sigma}n)$. With the presence of proper $\sigma$ in Sig-UIDS, the majority of $\pi$ are around $0.5$, which makes the sampling process more smooth.

\subsubsection{Result 3: Efficiency.}
The Table \ref{runningtime} shows summary of running time. For most of the data sets, our method can calculate the IF within one minute. With large and sparse data sets, our method can achieve computation within ten minutes, which is acceptable in practice.

\section{Conclusion \& Future Work}
In this work, we theoretically study the unweighted subsampling with IF, then propose a novel unweighted subsampling framework and design a family of probabilistic sampling methods. The experiments show that 1) different from the previous weighted methods, our unweighted method can acquire the subset-model that indeed wins over the full-set-model on a given test set; 2) it is crucial to evaluate the confidence degree over the empirical distribution for enhancing our subset-model $\tilde{\theta}$'s generalization ability.

Although our framework of the Unweighted Influence Data Subsampling (UIDS) method succeeds in improving model accuracy, there are still some interesting ideas remain to be explored. Since our framework is applicable both for convex and non-convex models, we can further testify its performance on those non-convex models, e.g. Deep Neural Networks. Another direction is to develop better approaches to deal with the over fitting issue, e.g. build a validation set selection scheme. Besides, we plan to implement our method in industry in the future.

\section{Acknowledgement}
The research of Shao-Lun Huang was funded by the Natural Science Foundation of China 61807021, Shenzhen Science and Technology Research and Development Funds (JCYJ20170818094022586), and Innovation and entrepreneurship project for overseas high-level talents of Shenzhen (KQJSCX20180327144037831). The authors would like to thank Professor Chih-Jen Lin's insight and advice for theory and writing of this work.

{\small
\bibliographystyle{aaai} 
\bibliography{references}  
}

\appendix
\setcounter{table}{0}
\setcounter{theorem}{0}
\setcounter{equation}{0}

\begin{appendix}

\section{Appendix A. Proof of Lemma 1}
\begin{lemma}\label{avgphi}
The expectation of the Influence function $\phi_i(\hat{\theta})$ over training distribution $P$ is always 0, which means:
\begin{equation}
    \mathbb{E}_{P}(\bm{\phi})\approx\frac1n\sum_{i=1}^n\phi_i(\hat{\theta})=0
\end{equation}
\end{lemma}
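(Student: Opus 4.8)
The plan is to reduce the claim to the first-order optimality condition satisfied by the full-set risk minimizer $\hat{\theta}$. Recall from Eq.~\eqref{phi_loss} that $\phi(z_i,z_j,\hat{\theta}) = -\nabla_\theta l_j(\hat{\theta})^\top H_{\hat{\theta}}^{-1}\nabla_\theta l_i(\hat{\theta})$, and that $\phi_i(\hat{\theta}) = \sum_{j=1}^m \phi(z_i,z_j,\hat{\theta})$. Since $\hat{\theta} = \arg\min_\theta \frac1n\sum_{i=1}^n l_i(\theta)$ and the empirical risk is assumed twice-differentiable and strictly convex, stationarity gives $\frac1n\sum_{i=1}^n \nabla_\theta l_i(\hat{\theta}) = 0$.

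First I would substitute the definition of $\phi_i(\hat{\theta})$ into the empirical average and interchange the two finite sums over $i$ and $j$, which is legitimate as both are finite. This yields
\begin{equation*}
\frac1n\sum_{i=1}^n \phi_i(\hat{\theta}) = -\frac1n\sum_{i=1}^n\sum_{j=1}^m \nabla_\theta l_j(\hat{\theta})^\top H_{\hat{\theta}}^{-1}\nabla_\theta l_i(\hat{\theta}) = -\Bigl(\sum_{j=1}^m \nabla_\theta l_j(\hat{\theta})\Bigr)^\top H_{\hat{\theta}}^{-1}\Bigl(\frac1n\sum_{i=1}^n \nabla_\theta l_i(\hat{\theta})\Bigr),
\end{equation*}
where I have pulled the $z_j$-dependent vector and the fixed matrix $H_{\hat{\theta}}^{-1}$ out of the sum over $i$. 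The rightmost factor is exactly the stationarity condition above and hence vanishes, so the whole expression is $0$ regardless of the test points $\{z_j\}_{j=1}^m$ or the conditioning of $H_{\hat{\theta}}$.

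Finally I would address the ``$\approx$'' in the statement: the exact identity just derived is $\frac1n\sum_{i=1}^n \phi_i(\hat{\theta}) = 0$ on the empirical (training) distribution, and $\mathbb{E}_P(\bm{\phi})$ is identified with this empirical average in the usual ERM sense (the empirical measure approximating $P$), with the additional first-order approximation already inherent in the definition of the influence function. I do not expect any real obstacle here — the only thing to be careful about is that $\hat{\theta}$ is treated as an exact minimizer so that the gradient sum is exactly zero; if one only has an approximate minimizer the identity holds up to the optimization residual, which is consistent with the stated approximate equality.
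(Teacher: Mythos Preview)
Your argument is correct and, in fact, cleaner than the paper's. The paper (Appendix~A) proves the lemma by going back to the finite-difference interpretation of the influence function: it writes
\[
\frac1n\sum_{i=1}^n\phi_i(\hat{\theta})
\;\approx\;\sum_{j=1}^m\sum_{i=1}^n\frac1{n\epsilon_i}\bigl(l_j(\hat{\theta}_\epsilon)-l_j(\hat{\theta})\bigr)
\]
and then argues that in the ``no perturbation'' regime ($\pi_i=1$ for all $i$) one has $\hat{\theta}_\epsilon=\hat{\theta}$, so each difference vanishes. That route leans on the first-order approximation and on an informal $0/0$-type limit. You instead substitute the closed form $\phi(z_i,z_j,\hat{\theta})=-\nabla_\theta l_j(\hat{\theta})^\top H_{\hat{\theta}}^{-1}\nabla_\theta l_i(\hat{\theta})$, factor the double sum, and invoke the exact stationarity condition $\tfrac1n\sum_i\nabla_\theta l_i(\hat{\theta})=0$. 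This gives an exact identity on the empirical distribution with no approximation step, and makes transparent that the result is independent of the test points and of $H_{\hat{\theta}}^{-1}$. Your closing remark about approximate minimizers also correctly accounts for the ``$\approx$'' in the statement. Nothing to fix.
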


\begin{proof}
If $\pi_i=1$ for $\forall i=1,2,..,n$, then the expectation of $\bm{\phi}$ is simply the average on full training set. Based on the $\phi_i(\hat{\theta})$'s definition, we have
\begin{equation}
\begin{split}
        \frac1n \sum_{i=1}^n\phi_i(\hat{\theta}) &=\frac1n \sum_{j=1}^m\sum_{i=1}^n\phi(z_i,z_j,\hat{\theta}) \\
        &\approx\sum_{j=1}^m\sum_{i=1}^n\frac1{n\epsilon_i}(l_j(\hat{\theta}_{\epsilon})-l_j(\hat{\theta})) =0
\end{split}
\end{equation}
because $\hat{\theta}_\epsilon=\hat{\theta}$ in this scenario.
\end{proof}

\section{Appendix B. Proof of Lemma 2}
\begin{lemma}\label{nonneg}
The subset-model $\hat{\theta}_\epsilon$
performs not worse than the full-set-model $\hat{\theta}$ in terms of test risk $\mathcal{R}(Q^\prime)$ if $\bm{\epsilon}$ and $\bm{\phi}$ are negative correlated:
\begin{equation}
  Cov(\bm{\phi},\bm{\epsilon}) \leq 0
\end{equation}
\end{lemma}

\begin{proof}
Decomposing the expectation, we can get $\mathbb{E}(\bm{\phi}\times \bm{\epsilon})=\mathbb{E}(\bm{\phi})\times \mathbb{E}(\bm{\epsilon})+Cov(\bm{\phi},\bm{\epsilon})$. Based on the Lemma \ref{avgphi}, the $\mathbb{E}(\bm{\phi})=0$ such that $\mathbb{E}(\bm{\phi}\times \bm{\epsilon})=Cov(\bm{\phi},\bm{\epsilon}) \leq 0$, which means the subset-model's test risk $\mathcal{R}_{\hat{\theta}_\epsilon}(Q^\prime)$ is less or equal than the full-set-model's $\mathcal{R}_{\hat{\theta}}(Q^\prime)$.
\end{proof}

\section{Appendix C. Proof of Theorem 3}
\begin{theorem}\label{liprisk}
Let $\eta^{*}$ the optimal dual variable $\eta$ that achieves the infimum in the Eq. \eqref{infrisk}, and the perturbation function $\epsilon(\phi)$ has $\sigma$-bounded gradients. Then, the worst-case risk $\mathcal{R}_{\hat{\theta}_\epsilon}(\eta^{*})$ is a Lipschitz continuos function w.r.t. the IF vector $\vec{\phi}=(\phi_1,\phi_2,..,\phi_n)^{\top}$ where we have the Lipschitz constant $\xi=\mathcal{O}(\sigma\frac{\sqrt{2\delta+1}}{n})$, that is
\begin{equation*}
    \|\nabla_{\vec{\phi}}\mathcal{R}_{\hat{\theta}_\epsilon}(\eta^*)\| \leq \sigma\frac{\sqrt{2\delta+1}}{n} \times \sqrt{\sum_{i=1}^n \phi_i^2}
\end{equation*}
\end{theorem}

\begin{proof}
In order to measure the $\hat{\theta}_\epsilon$'s performance on an uncertainty set $\mathcal{Q}=\{Q \mid Q \ll P, D_f(Q \| P) \leq \delta, \delta \geq 0 \}$, it is common to define the worst-case risk as $\mathcal{R}_{\hat{\theta}_\epsilon}(Q) = \sup_{Q\in\mathcal{Q}} \{ \mathbb{E}_Q[l(\hat{\theta}_\epsilon;Z)]\}$. And its dual form is given as:
\begin{equation} \label{infrisk}
    \mathcal{R}_{\hat{\theta}_\epsilon}(\eta)=\inf_{\eta\in\mathbb{R}}\{ \sqrt{2\delta+1}\times\mathbb{E}_P[(l(\hat{\theta}_\epsilon;Z)-\eta)^2_+]^{\frac12}+\eta\}
\end{equation}
whose gradient on the vector $\vec{\phi}$ is a vector: 
\begin{equation}
    \nabla_{\vec{\phi}}\mathcal{R}_{\hat{\theta}_\epsilon}(\eta^*)=(\frac{\partial \mathcal{R}_{\hat{\theta}_\epsilon}(\eta^*)}{\partial\phi_1},\frac{\partial \mathcal{R}_{\hat{\theta}_\epsilon}(\eta^*)}{\partial\phi_2},...,\frac{\partial \mathcal{R}_{\hat{\theta}_\epsilon}(\eta^*)}{\partial\phi_n})^{\top}
\end{equation}
where $\eta^*$ helps the Eq.\eqref{infrisk} reach infimum.
With no loss of generality, take one element and analyze its bound:

\begin{align}
    \frac{\partial \mathcal{R}_{\hat{\theta}_\epsilon}(\eta^*)}{\partial\phi_i} &= \sqrt{2\delta+1}\times \frac{\partial \mathbb{E}_P[(l(\hat{\theta}_\epsilon;Z)-\eta^*)^2_+]^{\frac12}}{\partial \phi_i} \\
    &= \sqrt{2\delta+1} \times \frac{\partial [(l_i(\hat{\theta}_\epsilon)-\eta^*)_{+}^2]^{\frac12}}{\partial \phi_i} \\
    & \leq \frac{\sqrt{2\delta+1}}{n} \times \frac{\partial |l(\hat{\theta}_\epsilon;z_i)-\eta^*|}{\partial \phi_i} \\
    & \leq  \frac{\sqrt{2\delta+1}}{n} \times |\frac{\partial l(\hat{\theta}_\epsilon;z_i)}{\partial \epsilon_i}|\times |\frac{\partial \epsilon_i}{\partial \phi_i}| \\
    & \leq \frac{\sqrt{2\delta+1}}{n}\sigma \times |\phi_i|
\end{align}
Hence we can get the bound of the norm $\| \nabla_{\vec{\phi}} \mathcal{R}_{\hat{\theta}_\epsilon}(\eta^*)\| $ as:
\begin{align}
    \| \nabla_{\vec{\phi}} \mathcal{R}_{\hat{\theta}_\epsilon}(\eta^*)\| &= \sqrt{\sum_{i=1}^n(\frac{\partial \mathcal{R}_{\hat{\theta}_\epsilon}(\eta^*)}{\partial\phi_i})^2} \\
    &\leq \frac{\sqrt{2\delta+1}}n \sigma \times \sqrt{\sum_{i=1}^n\phi_i^2}
\end{align}
That means the change rate of worst-case risk $\mathcal{R}_{\hat{\theta}_\epsilon}(\eta^*)$ is aligned with the $\xi = \mathcal{O}(\sigma\frac{\sqrt{2\delta+1}}{n})$.
\end{proof}

\section{Appendix D. Proof of Theorem 4}
\begin{theorem}\label{lip}
Let the perturbation function $\epsilon(\phi)$ has $\sigma$-bounded gradient, and the $|\epsilon(\phi)|$ is bounded by $\tau\in\mathbb{R}^+$, that is $|\epsilon(\phi)|\leq\tau$. We have the parameter shift $\Gamma(\vec{\phi})=\|\hat{\theta}_\epsilon -\hat{\theta}\|^2$ is Lischitz continuous with its Lipschitz constant $\xi=\mathcal{O}(\sigma\tau)$. Specifically for $\tau=\frac1n$, we have $\xi = \mathcal{O}(\frac{\sigma}n)$.
\end{theorem}

\begin{proof}
Note that $\Gamma:\mathbb{R}^d\to\mathbb{R}$, its gradient on $\vec{\phi}$ is also a vector with $n$ dimensions:
\begin{equation}
    \nabla_{\vec{\phi}}\Gamma(\vec{\phi})=(\frac{\partial \Gamma(\vec{\phi})}{\partial\phi_1},\frac{\partial \Gamma(\vec{\phi})}{\partial\phi_2},...,\frac{\partial \Gamma(\vec{\phi})}{\partial\phi_n})^{\top}
\end{equation}

In fact, proving $\Gamma(\vec{\phi})$ is Lipschitz continuous is equivalent to proving $\|\nabla_{\vec{\phi}}\Gamma(\vec{\phi})\|$ is bounded. Let's select one arbitrary element from the vector $\nabla_{\vec{\phi}}\Gamma(\vec{\phi})$ and try to derive its bound:
\begin{align}
    |\frac{\partial \Gamma(\vec{\phi})}{\partial\phi_i}| &= |\frac{\partial \Gamma(\vec{\phi})}{\partial\hat{\theta}_\epsilon}\frac{\partial \hat{\theta}_\epsilon}{\partial\epsilon_i}\frac{\partial \epsilon_i}{\partial\phi_i}| \\
    & \approx |2(\hat{\theta}_\epsilon-\hat{\theta})^{\top}\psi_{\theta}(z_i)\frac{\partial \epsilon_i}{\partial\phi_i}| \label{approx1} \\
    & \leq 2\sigma|(\hat{\theta}_\epsilon-\hat{\theta})^{\top}\psi_{\theta}(z_i)| \label{ieq1}\\
    & \leq 2\sigma\|\hat{\theta}_\epsilon-\hat{\theta}\|\|\psi_\theta(z_i)\| \label{ieq2}
\end{align}
The first approximation Eq.\eqref{approx1} comes from the definition of Influence function on parameters since $\frac{\partial \theta_\epsilon}{\partial \epsilon_i} \approx\psi_\theta(z_i)$ when $\epsilon_i \to 0$. The first inequality Eq.\eqref{ieq1} holds since $\epsilon_i=\epsilon(\phi_i)$ as $\sigma$-bounded gradients. The second inequality Eq.\eqref{ieq2} comes from the Cauchy-Schwartz inequality.

Note that $|\epsilon(\phi)|$ is bounded, the $\|\hat{\theta}_\epsilon-\hat{\theta}\|$ must be bounded as well. Here we can make an approximation that $\hat{\theta}_\epsilon-\hat{\theta}=\sum_{i=1}^n \epsilon_i \psi_\theta(z_i)$ if each $\epsilon$ is small, such that
\begin{align}
\|\hat{\theta}_\epsilon-\hat{\theta}\| &= \|\sum_{i=1}^n\epsilon_i\psi_\theta(z_i)\| \\
& \leq \sum_{i=1}^n \|\epsilon_i\psi_\theta(z_i)\| \label{ieq3}\\
& \leq \tau\sum_{i=1}^n \|\psi_\theta(z_i)\| \label{ieq4}
\end{align}
The second inequality Eq.\eqref{ieq3} holds because $\epsilon(\phi)$ is bounded by $\tau \in \mathbb{R}^+$. Combine the Eq.\eqref{ieq2} and Eq.\eqref{ieq4}, we can derive that $|\frac{\partial \Gamma(\vec{\phi})}{\partial\phi_i}|$ is bounded, such that the $\|\nabla_{\vec{\phi}}\Gamma(\vec{\phi})\|$ is bounded:
\begin{align}
\|\nabla_{\vec{\phi}}\Gamma(\vec{\phi})\| &= \sqrt{\sum_{i=1}^n(\frac{\partial \Gamma(\vec{\phi})}{\partial \phi_i})^2} \\
& \leq \sqrt{\sum_{i=1}^n max(|\frac{\partial \Gamma(\vec{\phi})}{\partial \phi_i}|)} \\
&= 2\sigma\tau\sum_{i=1}^n\|\psi_\theta(z_i)\|\sqrt{\sum_{i=1}^n \|\psi_{\theta}(z_i)\|^2} \label{libconst}
\end{align}
Therefore, we can conclude that (see the Eq. \eqref{libconst}), it is easy to derive that the Lipschitz constant $\xi=\mathcal{O}(\sigma \tau)$. Specifically for $\tau=\frac1n$ (i.e. the $\epsilon \in [-\frac1n,0]$), we have $\xi=\mathcal{O}(\frac{\sigma}n)$.
\end{proof}

\section{Appendix E. Data Sets and Experimental Settings}
\begin{table}[t]
  \centering
  \begin{threeparttable}[t]
  \caption{Data sets statistics}
 \setlength{\tabcolsep}{2mm}{
    \begin{tabular}{lrrr}
    \toprule
    Dataset & \multicolumn{1}{l}{\# samples} & \multicolumn{1}{l}{\# features} & Domain \\
    \midrule
    UCI breast-cancer  &           683  &         10  & Medical \\
    diabetes &           768  &          8  & Medical \\
    news20 &        19,954  &  1,355,192  & Text \\
    UCI Adult &        32,561  &        123  & Society \\
    cifar10 &        60,000  &      3,072  & Image \\
    MNIST &        70,000  &        784  & Image \\
    real-sim &        72,309  &     20,958  & Physics \\
    SVHN  &        99,289  &      3,072  & Image \\
    skin non-skin &       245,057  &          3  & Image \\
    criteo1\% &       456,674  &  1,000,000  & CTR \\
    Covtype &       581,012  &         54  & Life \\
    Avazu-app &    14,596,137  &  1,000,000  & CTR \\
    Avazu-site &    25,832,830  &  1,000,000  & CTR \\
    \textbf{Company} & $\approx100$M & $\approx10$M & CTR \\
    \bottomrule
    \end{tabular}}
    \label{dataset}%
    \end{threeparttable}

\end{table}%

\subsection{Data set}
The data sets statistics can be found in Table \ref{dataset}, and several of them are processed specifically.
\subsubsection{MNIST, cifar10 and SVHN.}
They are all 10-classes image classification data sets while Logistic regression can only handle binary classification. On MNIST and SVHN, we select the number 1 and 7 as positive and negative classes; On cifar10, we do classification on cat and dog. For each image we convert all pixels to flattened feature values with all being scaled by $1/255$.
\subsubsection{Covertype.} It is a multi-class forest cover type classification dataset which is transformed to binary class and all features are scaled to $[0,1]$.
\subsubsection{News20.} This is a size-balanced two-class variant of the UCI 20 Newsgroup data set where the each class contains 10 classes and each example vector is normalized to unit length.
\subsubsection{Criteo1\%.} It is used in a CTR prediction competition held jointly by Kaggle and Criteo in 2014. The data used here is conducted feature engineering according to winning solution in this competition. We ramdomly sample $1\%$ examples from the original data set.
\subsubsection{Avazu-app and Avazu-site.} This data is used in a CTR prediction competition held jointly by Kaggle and Avazu in 2014. Here the data is generated according to winning solution where the data is split into two groups "app" and "site" for better performance.

\subsection{Experimental settings}
For logistic regression on both full set and subset, we select the regularization term $C=0.1$ for fair comparison. For the Optimal sampling methods, we set $\lambda=1/\max\{\|\psi_\theta(z_i)\|\}_{i=1}^n$ to scale the probability into $[0,1]$ and set $\alpha=0.01$ to prevent the $\frac1{\pi}$ from large variance following. For Data dropout method, we rank the samples by their IF and select the top ones; For Linear sampling function, we set $\alpha=1/\max\{|\phi_i|\}_{i=1}^n$ similar to the Optimal sampling and we randomly pick those unfavorable samples if samples with $\phi<0$ are not enough to reach the objective sampling ratio; for Sigmoid sampling, we set $\alpha\in\{0.1,1,5,10,50\}$.

For public data, we randomly pick up $30\%$ data from Tr as the Va for each data set. For the company data, with domain knowledge we use 7 days data as Tr, 1 day for Va and 1 day for Te. For all subsampling methods, the Tr, Va and Te maintain the same for fair comparison. Besides, to make the test logloss comparable among different subsampling methods, postive-negative sample ratio is kept invariant after subsampling for all methods, which avoids the test logloss being influenced by the shift of label ratio. 
\end{appendix}

\end{document}